\tikzset{bigneuron/.style={circle, draw, inner sep = 0, minimum width = 5.5ex}}
\tikzset{smallneuron/.style={circle, draw, inner sep = 0, minimum width = 4ex}}
\tikzset{transform/.style={fill=white, circle}}
\tikzset{connection/.style={-{Stealth}}}
\newcommand{\relu}{
	\begin{tikzpicture}
		\draw [line width=1pt] (-1.1ex,0) -- (0,0) -- (0.9ex,0.9ex);
	\end{tikzpicture}
      }
\newcommand{\abs}[1]{\left\lvert #1 \right\rvert}
\newcommand{\R}{\mathbb{R}}
\newcommand{\N}{\mathbb{N}}
\newcommand{\Q}{\mathbb{Q}}
\DeclareMathOperator{\poly}{poly}
\DeclareMathOperator{\dist}{dist}
\newcommand{\problemdef}[3]{
	\begin{center}
		\begin{minipage}{0.95\linewidth}
			\noindent
			\textsc{#1}
			
			\vspace{2pt}
			\setlength{\tabcolsep}{3pt}
			\begin{tabular}{p{0.19\linewidth}p{0.75\linewidth}}
				\textbf{Input:} 		& #2 \\
				\textbf{Question:} 	& #3
			\end{tabular}
		\end{minipage}
	\end{center}
}
\newcommand{\optproblemdef}[3]{
	\begin{center}
		\begin{minipage}{0.95\linewidth}
			\noindent
			\textsc{#1}
			
			\vspace{2pt}
			\setlength{\tabcolsep}{3pt}
			\begin{tabular}{p{0.12\linewidth}p{0.82\linewidth}}
				\textbf{Input:} 		& #2 \\
				\textbf{Task:} 	& #3
			\end{tabular}
		\end{minipage}
	\end{center}
}
\newtheorem{theorem}{Theorem}
\newtheorem{proposition}[theorem]{Proposition}
\newtheorem{corollary}[theorem]{Corollary}
\Crefname{theorem}{Theorem}{Theorems}
\Crefname{proposition}{Proposition}{Propositions}
\Crefname{corollary}{Corollary}{Corollaries}
\begin{document}

\title{The Computational Complexity of ReLU Network Training\\Parameterized by Data Dimensionality}

\author{\name Vincent Froese \email vincent.froese@tu-berlin.de \\
  \addr Technische Universit{\"a}t Berlin,\\
  Algorithmics and Computational Complexity,\\
  Ernst-Reuter-Platz 7,\\
  D-10587 Berlin, Germany
  \AND
       \name Christoph Hertrich \email c.hertrich@lse.ac.uk\\
       \addr London School of Economics and Political Science,\\
       Department of Mathematics,\\
       Houghton Street,\\
       London WC2A 2AE, United Kingdom
       \AND
       \name Rolf Niedermeier \email rolf.niedermeier@tu-berlin.de\\
       \addr Technische Universit{\"a}t Berlin,\\
  Algorithmics and Computational Complexity,\\
  Ernst-Reuter-Platz 7,\\
  D-10587 Berlin, Germany
     }


\maketitle

\begin{abstract}
Understanding the computational complexity of training simple 
	neural networks with rectified linear units (ReLUs)
	has recently been a subject of intensive research.
	Closing gaps and complementing results from the literature,
	we present several results on the parameterized complexity 
	of training two-layer ReLU networks with respect to various loss functions.
	After a brief discussion of other parameters, we focus on 
	analyzing the influence of the dimension~$d$ of the training data on the computational complexity.
	We provide running time lower bounds in terms of~\mbox{W[1]-hardness} for parameter~$d$ and prove that known
	brute-force strategies are essentially optimal 
	(assuming the Exponential Time Hypothesis).
	In comparison with previous work, our results hold for a broad(er) 
	range of loss functions, including $\ell^p$-loss for all $p\in[0,\infty]$.
	In particular, 
	we improve a known polynomial-time algorithm for constant~$d$ 
	and convex loss functions to a more general class of loss functions, 
	matching our running time lower bounds also in these cases.
\end{abstract}

\section{Introduction}

Dimensionality reduction of data is a central issue in many machine
learning scenarios~\cite{BFN19,MPH09}.
In this paper, our focus is on addressing a natural follow-up 
question: To what extent can ``low-dimensionality'' of
data points help in lowering the worst-case 
computational complexity of the task of 
training 
neural networks?
This question is particularly relevant from a practical point of view
because real-life data, even if high-dimensional, is often assumed to be contained in a low-dimensional
submanifold of the input space.
To answer this question, we will employ tools and 
concepts from parameterized complexity analysis.
Doing so, we focus on the very basic case of 
two-layer ReLU (rectified linear units) neural network training.
We believe that the studied problem, though very simple and practically irrelevant, is a basic building block in the grand challenge of gaining a fundamental understanding of the power and limitations of ReLU networks.
As already pointed out in the literature~\cite{BJW19,GKMR21}, understanding shallow networks is
a natural first step which is already quite involved and challenging.
To the best of our knowledge, we are the first to apply principles from parameterized complexity theory
to empirical risk minimization of ReLU networks.
Before proceeding with a discussion of related work
and our new contributions, we first provide some formal
definitions concerning the problems which are central to our work.

\begin{figure}[t]
	\centering
	\begin{tikzpicture}[]
		\footnotesize
		\node[smallneuron, label=below:{$b_1$}] (n1) at (0,11ex) {\relu};
		\node[smallneuron, label=below:{$b_2$}] (n2) at (0,2ex) {\relu};
		\node[rotate=90] (npunkt) at (0,-6ex) {$\mathbf{\cdots}$};
		\node[smallneuron, label=below:{$b_k$}] (n3) at (0,-11ex) {\relu};
		\node[smallneuron, label=left:{$x_1$}] (in1) at (-24ex,7.5ex) {};
		\node[smallneuron, label=left:{$x_2$}] (in2) at (-24ex,2.5ex) {};
		\node[rotate=90] (inpunkt) at (-24ex,-2.5ex) {$\mathbf{\cdots}$};
		\node[smallneuron, label=left:{$x_d$}] (in3) at (-24ex,-7.5ex) {};
		\node[smallneuron] (out) at (24ex,0) {};
		\draw[connection] (in1) -- (n1) node[above, pos=0.8] {$\mathbf w_1$};
		\draw[connection] (in2) -- (n1);
		\draw[connection] (in3) -- (n1);
		\draw[connection] (in1) -- (n2);
		\draw[connection] (in2) -- (n2);
		\draw[connection] (in3) -- (n2) node[below, pos=0.8] {$\mathbf w_2$};
		\draw[connection] (in1) -- (n3);
		\draw[connection] (in2) -- (n3);
		\draw[connection] (in3) -- (n3) node[below, pos=0.8] {$\mathbf w_k$};
		\draw[connection] (n1) -- (out) node[above, pos=0.75] {$a_1$};
		\draw[connection] (n2) -- (out) node[below, pos=0.75] {$a_2$};
		\draw[connection] (n3) -- (out) node[below, pos=0.75] {$a_k$};
	\end{tikzpicture}
	\caption{The neural network architecture we study in this paper: After the input layer (left) with $d$ input neurons, we have one hidden layer with $k$ ReLU neurons and a single output neuron without additional activation function.}
	\label{Fig:NNArchitecture}
\end{figure}
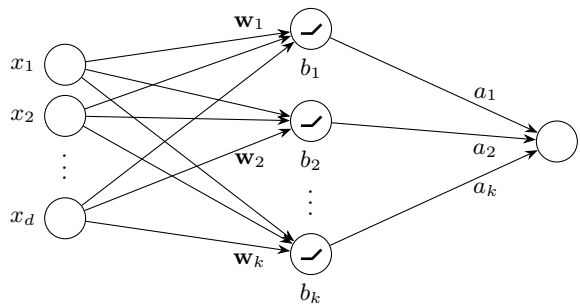

We study empirical risk minimization for neural networks with ReLU activation.
A \emph{rectifier} is the function~$[x]_+ \coloneqq \max(0,x)$.
Given a loss function $\ell\colon\R\times\R\to\R_{\geq0}$, mapping the predicted and the true label to a loss value, the problem of training a two-layer ReLU neural network with~$k$ hidden neurons and a single output neuron (see \Cref{Fig:NNArchitecture}) is defined as follows:

\optproblemdef{$k$-ReLU($\ell$)}{Data points~$(\mathbf{x}_1,y_1),\ldots,(\mathbf{x}_n,y_n)\in \R^d\times\R$.}{Find weight vectors $\mathbf w_1, \ldots, \mathbf w_k\in \R^d$, biases $b_1,\ldots,b_k\in\R$, and coefficients $a_1,\ldots,a_k\in\{-1,1\}$ that minimize
	\[
	\sum_{i=1}^n \ell\Bigg(\sum_{j=1}^k a_j[\langle\mathbf{w}_j,\mathbf{x}_i\rangle + b_j]_+,\ y_i\Bigg).
	\]
      }
As usual in the context of computational complexity analysis, we consider the decision instead of the optimization version. The corresponding decision problem is
to decide whether a target training error of at most~$\gamma\in\R$ can be achieved.
We believe it is of fundamental interest to know where the borderlines of exact solvability of ReLU training are.
One of our main contributions is to prove strong computational hardness results that already hold for only a single hidden ReLU neuron, that is, $k=1$.

In this work, we focus on the case of $\ell^p$-loss functions, that is, $\ell(\hat{y}, y)=\abs{\hat{y} - y}^p$, for \mbox{$p\in\left]0,\infty\right[$}. Note that this includes both convex and concave loss functions (with respect to the absolute error~$\abs{\hat{y}-y}$). In addition, we also investigate the limit cases $p=0$ and $p=\infty$. The $\ell^0$-loss (which is completely insensitive to outliers) simply counts the number of points that are not perfectly fitted, while the $\ell^\infty$-loss only cares about outliers, that is, it measures only the largest error on any data point. The parameter $p$ can be used to smoothly interpolate between these two extreme options in practice. Notably, concave loss functions ($p<1$) are explicitly used to obtain very outlier-robust methods \cite{wang2013semi,jiang2015robust,de2018concave}. See also \cite{janocha2016loss} for an analysis of the impact of different loss functions on neural network training.

Without loss of generality, we always assume that~$n \ge d+1$ because otherwise the problem can be solved in the lower-dimensional affine hull of the input data points.\footnote{To see this, assume that all~$\mathbf{x}_i$ are contained in an affine subspace $A$ with dimension strictly less than~$d$. Using a bijective affine transformation $T\colon A\to \R^{\dim A}$, we now can solve the equivalent  $k$-\textsc{ReLU}($\ell$) problem with the lower-dimensional data points $(T(x_i),y_i)$, $i\in[n]$. The solution weights for the original problem can then be obtained by composing $T$ with the affine transformation given by the weights of the modified problem.}

The core theme of our work is to better understand how the 
dimension parameter~$d$ influences the computational 
complexity of ReLU network training as defined above. 
To this end, we conduct a parameterized complexity analysis~\cite{DF13}.
Before moving on to study the key parameter~$d$,
let us briefly discuss other parameters occurring in
our setting.
The most natural other parameters are:
the number~$k$ of ReLU neurons in the hidden layer,
the number~$n$ of input points, and the maximum target error~$\gamma$.

It turns out that the parameterized 
complexity for these three parameters is already settled due to 
the known literature and simple observations.
First, the case $k=1$ (that is, $1$-\textsc{ReLU($\ell$)}) is known
to be NP-hard for~\mbox{$\ell^2$-loss}~\cite{DWX20,GKMR21} and we even extend the NP-hardness to $\ell^p$-loss for arbitrary~\mbox{$p\in[0,\infty[$} (see \Cref{thm:hard}); this renders the parameter~$k$
hopeless in terms of getting efficient parameterized algorithms.
For the parameter~$n$, fixed-parameter tractability was already observed by~\citeA{GKMR21} (see also related work).
Finally, for $\gamma=0$, the case~$k=1$ is polynomial-time solvable~\cite{GKMR21} and for~$k\ge 2$ NP-hardness is known~\cite{BJW19,GKMR21}.
Thus, the dimension~$d$ clearly is the most interesting parameter also
from a theoretical point of view. We
close some knowledge gaps from the literature with respect to~$d$ 
by proving strong computational hardness results as well as matching upper bounds.
Before discussing our results in more detail, let us first 
review the closely related literature.

\paragraph*{Related Work.}
The NP-hardness of empirical risk minimization with $\ell^2$-loss for a single ReLU was shown independently by \citeA{DWX20} and \citeA{GKMR21}.
The work of \citeA{GKMR21} is probably the one closest to our work.
They provided an in-depth study concerning tight hardness results 
for depth-2 ReLU networks such as NP-hardness, conditional running time lower bounds, and hardness of approximation.
\citeA{ABMM18} provided a polynomial-time algorithm for $k$-ReLU($\ell$)
for $d\in O(1)$ and convex loss~$\ell$; in terms of parameterized algorithmics,
this is an XP-algorithm for parameter~$d$: 
the degree of the polynomial of the running time 
(only) depends on~$d$. The underlying idea of their algorithm is to basically iterate over all
$O(n^d)$ hyperplane partitions of the $n$~data points.
Indeed, as pointed out by \citeA{GKMR21}, 
since there are at most~$2^n$ partitions, 
the same algorithm implies fixed-parameter tractability 
for the parameter~$n$. Moreover, \citeA{GKMR21} remarked that the 
well-known Exponential Time Hypothesis (ETH) implies that no 
\mbox{$2^{o(n)}$-time} algorithm exists.
Deciding whether zero error ($\gamma =0$) is possible 
(that is, \emph{realizable} data) is 
polynomial-time solvable for a single ReLU by linear programming \cite{GKMR21}
and NP-hard for two \mbox{ReLUs}~\cite{GKMR21}.
Approximation has been subject to 
further works \cite{DWX20,DGKK20,GKMR21}.
Furthermore,
\citeA{BJW19} and \citeA{CKM21} showed fixed-parameter tractability 
results for related but different learning concepts of ReLU networks and \citeA{BDL20} studied the computational complexity of ReLU networks where the output neuron is also a ReLU.
\citeA{pilanci2020neural} show that training a 2-layer neural network can be reformulated as a convex program. However, the implications on the computational complexity are limited since their result requires the number of hidden neurons to be very large.
\citeA{bertschinger2022training} show that training 2-layer neural networks is complete for the complexity class $\exists\R$ (existential theory of the reals), implying that the problem is presumably not contained in NP. They generalize a previous result by \citeA{AKM21}, who showed the same fact for specifically designed, more complex architectures.

Finally, note that the number of dimensions appears naturally in parameterized 
complexity studies
for geometric problems~\cite{GKR09,KKW15}; moreover, it
occurs also in recent studies for
principal component analysis (PCA)~\cite{FGS20,SFGP19}
and in computer vision~\cite{CCN20}.

\paragraph*{Our Contributions.}
Essentially focusing on the influence of the dimension parameter~$d$
(which so far has been neglected in the literature), we 
provide two main contributions in terms of worst-case complexity analysis:
First, training a two-layer ReLU neural network is already computationally intractable 
even for a single hidden neuron and small~$d$ (\Cref{thm:hard}), that is, we show W[1]-hardness for
parameter~$d$ and provide an ETH-based lower bound of~$n^{\Omega(d)}$ 
matching the running time upper bound of $n^{O(d)}$ of the brute-force algorithm 
due to \citeA{ABMM18}.
Hence, our result shows that the combinatorial search among all~$O(n^d)$ possible hyperplane partitions is essentially the best one can do.
Notably, our hardness results even hold for very sparse data points with binary labels.
It is particularly surprising that (parameterized) hardness already appears in the case of a single ReLU neuron because this model is almost a linear model. Linear models (like support vector machines) can be easily trained in polynomial time. Hence, the presence of a single nonlinearity, in the form of a single hyperplane of breakpoints, increases the computational complexity a lot. This also indicates that learning more complicated network architectures is expected to be even more difficult because the set of representable piecewise linear functions becomes much more complex \cite{hertrich2021towards}.

Second, on the positive side, for any $k\geq 1$, we extend the XP-result for convex loss functions by \citeA{ABMM18} to concave
loss functions (\Cref{thm:XP}). Note that for W[1]-hard problems, an XP-algorithm is the best one can hope for. Hence, we completely settle the computational complexity parameterized by dimension $d$ of training a two-layer ReLU neural network for any $\ell^p$-loss with $p\in[0,\infty[$.

Besides these two main findings filling gaps in the literature,
we also contribute a polynomial-time 
algorithm (for arbitrary dimension) for training a single-neuron
ReLU network when using the \mbox{$\ell^\infty$-loss} function (\Cref{prop:poly}).
This generalizes the polynomial-time result due to \citeA{GKMR21} for the zero-error case.
\Cref{Tab:Overview} provides an overview of our results for the special case of a single hidden neuron ($k=1$).

\begin{table*}[t]
	\centering
	\caption{(Parameterized) computational complexity of training a single ReLU neuron with respect to parameter $d$ (input dimension) for $\ell^p$-loss functions.}
	\begin{tabular}{lll}
		\toprule
		& Hardness & Algorithm \\
		\midrule
		$p\in\left[0,1\right[$ & W[1]-h. + no $n^{o(d)}$-time alg. (\Cref{thm:hard}) & $n^{O(d)}\poly(n,d)$ (\Cref{thm:XP})\\
		$p\in\left[1,\infty\right[$ & W[1]-h. + no $n^{o(d)}$-time alg. (\Cref{thm:hard}) & $n^d\poly(n,d)$ \cite{ABMM18}\\
		$p=\infty$ & - & $\poly(n,d)$ (\Cref{prop:poly})\\
		\bottomrule
	\end{tabular}%
	\label{Tab:Overview}
\end{table*}

While hardness of approximation was already shown by \citeA{GKMR21}, we complement this by parameterized hardness of exact solutions.
Thus, our work points the way ahead towards a need of combining both algorithmic approaches and to look for approximation algorithms in FPT time.

\paragraph*{Parameterized Complexity.}
We assume the reader to be familiar with basic concepts of computational complexity theory.
Parameterized complexity is a multivariate approach to study the time complexity of computational problems~\cite{DF13,Cyg15}.
An instance~$(x,k)$ of a parameterized problem~$L\subseteq\Sigma^*\times\N$ consists of
a classical problem instance~$x\in\Sigma^*$ and a \emph{parameter} value~$k\in\N$.
A parameterized problem is \emph{fixed-parameter tractable (fpt)} (contained in the class FPT) if there exists an algorithm
solving any instance~$(x,k)$ in $f(k)\cdot|x|^{O(1)}$ time, where~$f$ is a function solely depending on~$k$.
Note that fixed-parameter tractability implies polynomial time for constant parameter values where, importantly, the degree of the polynomial is independent from the parameter value.
The class W[1] contains parameterized problems which are presumably not in FPT (e.g.~\textsc{Clique} parameterized by the size of the requested clique).
Parameterized intractability can be shown in terms of W[1]-hardness which is defined via \emph{parameterized reductions}.
A parameterized reduction from~$L$ to~$L'$ is an algorithm that maps an instance~$(x,k)$ in $f(k)\cdot |x|^{O(1)}$ time to an instance~$(x',k')$
such that~$k'\le g(k)$ for some function~$g$ and $(x,k)\in L$ if and only if~$(x',k')\in L'$.
The class XP contains all parameterized problems which can be solved in polynomial time if the parameter is a constant, that is,
in time~$f(k)\cdot |x|^{g(k)}$.
It is known that $\text{FPT}\subseteq \text{W[1]}\subseteq \text{XP}$ and that~$\text{FPT}\subsetneq\text{XP}$. 

\paragraph*{Exponential Time Hypothesis.}
The Exponential Time Hypothesis (ETH)~\cite{IP01} states that \textsc{3-SAT} cannot be solved in subexponential time in the number~$n$ of variables of the Boolean input formula.
That is, there exists a constant~$c > 0$ such that \textsc{3-SAT} cannot be solved in~$O(2^{cn})$ time.
The ETH implies that $\text{FPT}\neq\text{W[1]}$ (and hence $\text{P}\neq\text{NP}$)~\cite{Cyg15}.
It also implies running time lower bounds, for example, that \textsc{Clique} cannot be solved in
$\rho(k)\cdot n^{o(k)}$ time for any function~$\rho$, where~$k$ is the size of the sought clique and $n$~is the number of graph vertices~\cite{Cyg15}.

\paragraph*{Notation.} For $n\in\N$, let~$[n]:=\{1,\ldots,n\}$.

\section{Hardness of Training a Single ReLU with $\ell^p$-Loss in Small Dimension}

In this section, we show that there is no hope to obtain an FPT algorithm with respect to parameter $d$ for training even the simplest possible architecture consisting of a single neuron with respect to the $\ell^p$-loss for any~$p\in\left[0,\infty\right[$. To this end, we show intractability of the problem \mbox{1-\textsc{ReLU}($\ell^p$)}.
For $p=0$, the problem is to minimize the number of data points
that are not perfectly fitted.

\begin{theorem}\label{thm:hard}
	For $p\in[0,\infty[$, 
	$1$-\textsc{ReLU($\ell^p$)} is NP-hard, W[1]-hard with respect to dimension~$d$, and it cannot be solved in $\rho(d)\cdot n^{o(d)}$ time for any computable
	function~$\rho$ unless the Exponential Time Hypothesis fails.
\end{theorem}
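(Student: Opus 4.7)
The plan is to establish all three conclusions—NP-hardness, W[1]-hardness for parameter $d$, and the $\rho(d)\cdot n^{\Omega(d)}$ lower bound under ETH—through a single parameterized reduction from a problem that already possesses all three properties with respect to its natural parameter. Natural candidates are $k$-\textsc{Clique}, \textsc{Multicolored Clique}, or a $k$-dimensional vector/sum problem, each of which is W[1]-hard with parameter $k$ and cannot be solved in $\rho(k)\cdot n^{o(k)}$ time under ETH. A crucial quantitative requirement is that the dimension $d$ of the constructed instance depends \emph{linearly} on $k$; a superlinear blow-up would weaken the resulting lower bound from $n^{\Omega(d)}$ to a weaker form.

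The reduction would build an instance of $1$-\textsc{ReLU}($\ell^p$) in $\R^d$ with $d = \Theta(k)$ from two types of gadgets. \emph{Selection gadgets} associate a small group of data points with each of the $k$ choices a source-problem solution has to make, so that the position of the single breakpoint hyperplane $\{x : \langle \mathbf{w}, x\rangle + b = 0\}$ and the sign $a\in\{-1,1\}$ together encode a choice of one object per class. \emph{Verification gadgets} add further data points whose labels can only be matched simultaneously when the selected objects are pairwise consistent (form a clique, sum to a target, etc.). I would draw coordinates and labels from a small discrete set such as $\{0,1\}$, so that a correct guess yields an exact fit and any incorrect one produces a residual of constant magnitude; this also fits the paper's claim that hardness persists for sparse inputs with binary labels.

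The correctness argument then has the usual two directions. Given a valid solution to the source instance, I would exhibit weights $(\mathbf{w},b,a)$ that fit every data point exactly and hence achieve training loss $0$ under every $\ell^p$ with $p\in[0,\infty[$. Conversely, given a ReLU with loss at most some explicit threshold $\gamma$, I would decode the position of the hyperplane into the intended combinatorial object. To make this threshold argument uniform in $p$, the construction is scaled so that each individual data point contributes either $0$ or at least a fixed positive amount to the $\ell^p$-loss; then the YES/NO gap corresponds to at least one misclassified point for $p=0$ and to at least a constant $\ell^p$-error for every $p>0$, independently of the source instance.

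The main obstacle is the confluence of two delicate constraints. First, a single ReLU offers only one hyperplane of nonlinearity and one affine map on the active halfspace, so selection \emph{and} verification must be encoded through the same piece of nonlinearity, severely limiting the gadget structure. Second, the construction must remain robust across all $p\in[0,\infty[$ simultaneously, including the concave regime $p<1$ in which small errors are amplified relative to large ones. Dealing with this will likely require arranging the verification gadget so that any violation of the encoded consistency condition forces at least one data point to incur a residual bounded below by a constant, preventing the adversary from spreading the error thinly across many points to exploit concavity of $\ell^p$ for $p<1$.
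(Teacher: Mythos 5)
Your high-level plan---a parameterized reduction from \textsc{Multicolored Clique} with $d=\Theta(k)$, selection gadgets encoding one vertex per color, verification gadgets built from many copies of ``blocking'' points, and thresholds calibrated uniformly in $p$---is the right skeleton and matches the paper's strategy. However, there is a concrete flaw that makes your plan unworkable as stated: you require that a YES instance of the source problem yield weights fitting \emph{every} data point exactly, i.e.\ training loss $0$. Combined with the intended converse (NO instances force loss above the threshold $\gamma$), this would make ``source instance is a YES'' equivalent to ``the constructed data are exactly realizable by a single ReLU.'' But realizability by a single ReLU is decidable in polynomial time by linear programming (this is precisely the $\gamma=0$ result of Goel et al.\ cited in the paper, and the reason \Cref{prop:poly} exists). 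Your reduction would therefore place \textsc{Multicolored Clique} in P. Formally: if YES implies loss $0$ and NO implies loss $>\gamma\ge 0$, then no instance can have optimal loss in $\left]0,\gamma\right]$, so the YES instances are exactly the realizable ones, and one LP decides the problem. For $p=0$ the collapse is even more immediate, since ``zero misclassified points'' is again just realizability.

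The paper escapes this trap by making the YES-case optimum strictly positive: it sets $\gamma=N-k$, places one label-$1$ point per vertex (on a unit circle within the two coordinates of its color class) and $M$ label-$0$ copies of the midpoint of each forbidden vertex pair, and shows that the best a single ReLU can do is to ``select'' one vertex per color (fitting those $k$ points exactly) while necessarily incurring error $1$ on each of the remaining $N-k$ label-$1$ points. The combinatorial hardness lives entirely in \emph{which} $k$ points get selected, not in whether everything can be fitted. To repair your proposal you would need to redesign the gadgets so that even optimal solutions pay an unavoidable, exactly quantifiable baseline cost, and then choose $\delta$ (a near-miss tolerance) and the multiplicity $M$ so that the YES/NO gap survives for all $p\in[0,\infty[$ simultaneously---which is exactly the $(1-\delta)^p(N-k+1)>\gamma$ and $M\delta^p>\gamma$ bookkeeping in the paper's proof.
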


Note that \citeA{GKMR21} proved NP-hardness and conditional running time lower bounds for additive approximation of~$k$-\textsc{ReLU($\ell^2$)} for~$k\ge 1$.
Their running time lower bound for~$k=1$ is based on a newly introduced assumption of inapproximability of finding dense subgraphs and the lower bound for~$k>1$ assumes the \emph{Gap Exponential Time Hypothesis}~\cite{Dinur16} (which implies the ETH).
Their results are achieved via gap reductions from the problems of finding dense subgraphs and coloring hypergraphs. The reductions are focused on providing a gap which guarantees the approximation hardness. They achieve this by using a ``large'' number~$d$ of dimensions (typically equal to the number of vertices of the input (hyper)graph).
For our purpose, however, we need a more fine-grained parameterized reduction where~$d$ is ``small''.
To this end, we reduce from a colored variant of \textsc{Clique} such that~$d$ is linearly upper-bounded in the size of the clique.

\begin{proof}[Proof of \Cref{thm:hard}]
	We reduce from the following problem.
	
	\problemdef{Multicolored Clique}
	{An undirected graph~$G=(V,E)$ where the vertices are colored with~$k$ colors.}
	{Does~$G$ contain a $k$-clique (a complete subgraph with~$k$ vertices) with exactly one vertex from each color?}
	
	\textsc{Multicolored Clique} is NP-hard, W[1]-hard with respect to~$k$, and not solvable in \mbox{$\rho(k)\cdot |V|^{o(k)}$} time for any computable function~$\rho$ unless the Exponential Time Hypothesis fails~\cite{Cyg15}. We give a parameterized reduction (which is also polynomial-time) from \textsc{Multicolored Clique} to $1$-\textsc{ReLU($\ell^p$)} where the dimension of the data points
	is $d=2k$. Hence, the theorem follows.
	
	Let $G=(V,E)$ be an undirected graph with $N\coloneqq \abs{V}$ vertices and let $c\colon V\to[k]$ be a vertex coloring.
	We denote by $V_j=\{v_{j,1},\dots,v_{j,N_j}\}$ the set of vertices with color $j$, where $N_j\coloneqq \abs{V_j}$.
	In the following, we construct a set of data points from $\R^{2k}$ with labels in $\{0,1\}$, as well as a target error $\gamma\in \R$, such that these data points can be fitted by a ReLU function with $\ell^p$-error at most~$\gamma$ if and only if a multicolored $k$-clique exists in $G$.
	
	We set the target error to $\gamma\coloneqq N-k$.
	Next, we define a small value $0 < \delta < 1$ such that making an absolute error of value $1-\delta$ on $N-k+1$ different input points already exceeds the threshold~$\gamma$.
	For $p=0$, we simply choose $\delta\coloneqq0.5$.
	For~$p > 0$, let $\tilde{p}\coloneqq \max\{p,1\}$ and $\delta\coloneqq 1/(2\tilde{p}(N-k+1))$.
	This yields
	\begin{align}
		(1-\delta)^p(N-k+1)
		&\geq (1-\tilde{p}\delta)(N-k+1)\notag\\
		&>(1-2\tilde{p}\delta)(N-k+1)\notag\\
		&= N-k=\gamma,\label{Equ:delta}
	\end{align}
	where, in the case $p>1$, the first inequality follows from Bernoulli's inequality, and in the case $p\leq1$, it follows from $x^p\geq x$ for all~$x\in[0,1]$.
	
	In addition, we define a large integer~$M$ such that making an absolute error of~$\delta$ on~$M$ different input points also exceeds the threshold $\gamma$.
	For $p=0$, we choose $M\coloneqq N-k+1$. For~$p>0$, we set $M\coloneqq \lceil\gamma/\delta^p\rceil+1$, which implies
	\begin{equation}\label{Equ:M}
		M\delta^p>\gamma.
	\end{equation}
	Note that $\gamma\in O(N)$ and $1/\delta\in O(N)$. Thus, $M$ is polynomially bounded in the size of~$G$.
	
	\begin{figure}[t]
		\centering
		\begin{tikzpicture}[scale=2]
			\tikzset{vertex/.style={draw, fill=white, circle, inner sep=0pt,minimum size=5pt}}
			\tikzset{blocker/.style={draw, fill, circle, inner sep=0pt,minimum size=7pt}}
			\fill[shade] (-1.5,1.309/2) rectangle (1.5,1.1);
			\draw[black!20!white] (0,0) circle (1cm);
			\node[vertex,label=above:$\tilde{\mathbf{x}}_1$] (1) at (0,1) {};
			\node[vertex,label=right:$\tilde{\mathbf{x}}_2$] (2) at (0.951,0.309) {};
			\node[vertex,label=right:$\tilde{\mathbf{x}}_3$] (3) at (0.588,-0.809) {};
			\node[vertex,label=left:$\tilde{\mathbf{x}}_4$] (4) at (-0.588,-0.809) {};
			\node[vertex,label=left:$\tilde{\mathbf{x}}_5$] (5) at (-0.951,0.309) {};
			\draw[black!20!white] (1) -- (2) node[black,midway,blocker] {};
			\draw[black!20!white] (1) -- (3) node[black,midway,blocker] {};
			\draw[black!20!white] (1) -- (4) node[black,midway,blocker] {};
			\draw[black!20!white] (1) -- (5) node[black,midway,blocker] {};
			\draw[black!20!white] (2) -- (3) node[black,midway,blocker] {};
			\draw[black!20!white] (2) -- (4) node[black,midway,blocker] {};
			\draw[black!20!white] (2) -- (5) node[black,midway,blocker] {};
			\draw[black!20!white] (3) -- (4) node[black,midway,blocker] {};
			\draw[black!20!white] (3) -- (5) node[black,midway,blocker] {};
			\draw[black!20!white] (4) -- (5) node[black,midway,blocker] {};
			\draw[dashed,thick] (-1.5,1.309/2) -- (1.5,1.309/2);
		\end{tikzpicture}
		\caption{Schematic illustration of the reduction from \textsc{Multicolored Clique}. Shown are two dimensions corresponding to one of the~$k$ colors. The white points~$\tilde{\mathbf{x}}_1, \ldots, \tilde{\mathbf{x}}_5$ correspond to vertices of that color and have label~1. Black points indicate~$M$ copies of the corresponding middle point with label~0.
			The dashed line indicates the hyperplane defined by the weight vector~$\mathbf{w}$ of the ReLU neuron and the shaded area indicates the predictions of the neuron (darker means larger values).
			The idea is that exactly one white point can be predicted correctly (which selects the corresponding vertex to be in the clique) without predicting a black point incorrectly and thereby exceeding the error.}
		\label{fig:construction}
	\end{figure}
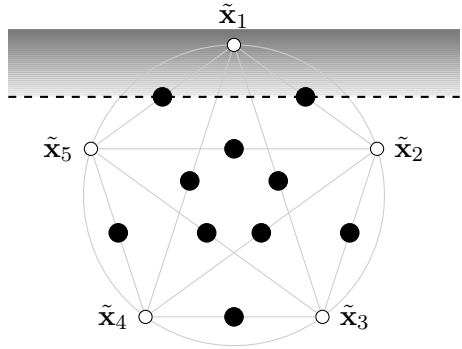
	
	Let $N_{\max}\coloneqq\max_{j\in[k]} N_j$ be the maximum number of vertices of one color. For our reduction we need $N_{\max}$ distinct rational points on the unit circle centered at the origin. For example, one can choose
	\[\tilde{\mathbf{x}}_i\coloneqq \left(\frac{1-i^2}{1+i^2},\frac{2i}{1+i^2}\right)\in\Q^2\]
	for each $i=1,2,\dots,N_{\max}$~\cite{ST94}.
	For each vertex $v_{j,i}\in V$, $j\in[k]$, $i\in[N_j]$, let \mbox{$\mathbf{x}_{j,i}=(\mathbf{0}_{2j-2}, \tilde{\mathbf{x}}_i, \mathbf{0}_{2k-2j})\in\R^{2k}$} be the point $\tilde{\mathbf x}_i$ lifted to $2k$ dimensions, where each color corresponds to two dimensions. Here, we use the notation~$\mathbf{0}_r$ for the $r$-dimensional zero-vector. We add two types of data points to our instance (see \Cref{fig:construction} for an example). First, for each vertex $v_{j,i}\in V$, add the point $(\mathbf x_{j,i},1)\in\R^{2k}\times\R$. Second, for each pair of distinct vertices $v_{j,i} \neq v_{j',i'}\in V$, if they cannot both be part of a multicolored clique because they are non-adjacent or have the same color, then add $M$ copies of the point $\big((\mathbf x_{j,i} + \mathbf x_{j',i'})/2,0\big)\in\R^{2k}\times\R$.
	This finishes the construction.
	
	We now show that there is a multicolored clique of size $k$ in $G$ if and only if these data points can be fitted by a ReLU with $\ell^p$-error at most $\gamma$. This then completes our reduction from \textsc{Multicolored Clique} to $1$-\textsc{ReLU($\ell^p$)} and hence implies the theorem.
	
	For the first direction, assume that the vertices~$v_{1,i_1},\dots,v_{k,i_k}$ form a multicolored clique of size $k$ in $G$.
	We define \mbox{$\varepsilon\coloneqq 1 - \max_{i\neq i'\in[N_{\max}]} \langle \tilde{\mathbf{x}}_i, \tilde{\mathbf{x}}_{i'}\rangle$}. Observe that $\varepsilon>0$, since the points $\tilde{\mathbf{x}}_i$, $i\in[N_{\max}]$, are distinct points on the unit circle.
	Let \mbox{$\mathbf{w}\coloneqq 2/\varepsilon \cdot (\tilde{\mathbf{x}}_{i_1}, \tilde{\mathbf{x}}_{i_2}, \dots, \tilde{\mathbf{x}}_{i_k})\in\R^{2k}$} and \mbox{$b\coloneqq 1-2/\varepsilon$}.
	We claim that the ReLU function \mbox{$f(\mathbf x)=[\langle\mathbf{w},\mathbf{x}\rangle + b]_+$} achieves an $\ell^p$-error of exactly~\mbox{$\gamma= N-k$}.
	To see this, first note that for each $j\in[k]$, we have \[\langle\mathbf{w},\mathbf{x}_{j,i_j}\rangle + b = 2/\varepsilon\cdot\langle\tilde{\mathbf{x}}_{i_j},\tilde{\mathbf{x}}_{i_j}\rangle + 1-2/\varepsilon = 1,\]
	where we used that $\tilde{\mathbf{x}}_{i_j}$ lies on the unit circle.
	Hence, the $k$ points~$\mathbf{x}_{1,i_1},\ldots,\mathbf{x}_{k,i_k}$ are perfectly fitted.
	Second, for each~$v_{j,i}\in V\setminus\{v_{1,i_1},\dots,v_{k,i_k}\}$, we have
	\begin{align*}\langle\mathbf{w},\mathbf{x}_{j,i}\rangle + b = 2/\varepsilon\cdot\langle\tilde{\mathbf{x}}_{i_j},\tilde{\mathbf{x}}_{i}\rangle + 1-2/\varepsilon \leq 2/\varepsilon\cdot(1-\varepsilon) + 1-2/\varepsilon = -1,\end{align*}
	where the inequality follows from our choice of $\varepsilon$.
	Hence, for each of these $N-k$ points, we have $f(\mathbf{x}_{j,i})=0$, that is, we incur an error of~$1$. Finally, for each pair of distinct vertices $v_{j,i} \neq v_{j',i'}\in V$ that are either non-adjacent or have the same color, note that at most one of the two vertices can belong to the clique. Thus, making use of our two calculations above, we obtain
	\begin{align*}
	\langle \mathbf w, (\mathbf{x}_{j,i} + \mathbf{x}_{j',i'})/2 \rangle + b = ((\langle \mathbf w, \mathbf{x}_{j,i} \rangle + b) + (\langle \mathbf w, \mathbf{x}_{j',i'} \rangle + b))/2 \leq (1 -1)/2 = 0.
	\end{align*}
	Hence, all points with label~$0$ are fitted exactly and the total $\ell^p$-error is equal to $\gamma= N-k$.
	
	For the reverse direction, suppose that there exist~\mbox{$\mathbf{w}\in\R^{2k}$} and~$b\in\R$ such that the ReLU function \mbox{$f(\mathbf x)=[\langle\mathbf{w},\mathbf{x}\rangle + b]_+$} achieves an $\ell^p$-error of at most \mbox{$\gamma= N-k$}. We show that the set \[C\coloneqq\{v_{j,i}\in V\mid f(\mathbf{x}_{j,i})>\delta\}\] forms a multicolored clique in $G$.
	First, observe that \mbox{$\abs{C}\geq k$}, because otherwise all data points associated with vertices in $V\setminus C$ would incur a total $\ell^p$-error of at least \mbox{$(1-\delta)^p(N-k+1)$}, which is larger than $\gamma$ by \eqref{Equ:delta}.
	Hence, it remains to show for each pair of vertices \mbox{$v_{j,i}\neq v_{j',i'}\in C$} that they belong to different color classes and are adjacent. Suppose the contrary.
	Then, by construction, the $1$-\textsc{ReLU($\ell^p$)} instance also contains $M$ copies of the point $\big((\mathbf x_{j,i} + \mathbf x_{j',i'})/2,0\big)\in\R^{2k}\times\R$. From \mbox{$\langle\mathbf{w},\mathbf{x}_{j,i}\rangle+b>\delta$} and $\langle\mathbf{w},\mathbf{x}_{j',i'}\rangle+b>\delta$, it follows by linearity that \[f((\mathbf x_{j,i} + \mathbf x_{j',i'})/2)\geq \langle\mathbf{w},(\mathbf x_{j,i} + \mathbf x_{j',i'})/2\rangle+b >\delta.\] Thus, we incur an $\ell^p$-error of at least $M\delta^p$, which is larger than $\gamma$ by~\eqref{Equ:M}.
	Hence, $C$ is indeed a multicolored $k$-clique.
\end{proof}
A closer inspection of the above proof reveals that hardness even holds for a more restricted problem.

\begin{corollary}\label[corollary]{Cor:hard}
	For $p\in [0,\infty[$, $1$-\textsc{ReLU($\ell^p$)} is NP-hard, W[1]-hard with respect to~$d$, and cannot be solved in \mbox{$\rho(d)\cdot n^{o(d)}$} time for any computable function~$\rho$ (assuming the Exponential Time Hypothesis) even if all input data points contain at most four non-zero entries and have binary labels.
\end{corollary}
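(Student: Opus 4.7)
The plan is to revisit the construction from the proof of \Cref{thm:hard} and observe that, without any modification, it already produces an instance satisfying the stated restrictions. The entire argument is a structural inspection of the data points built there.

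First, I would partition the data points into the two types introduced in the reduction. The vertex points are of the form $\mathbf{x}_{j,i} = (\mathbf{0}_{2j-2}, \tilde{\mathbf{x}}_i, \mathbf{0}_{2k-2j}) \in \R^{2k}$, so only the two coordinates of the color-$j$ block can be non-zero; these points carry label~$1$. The midpoint points $(\mathbf{x}_{j,i} + \mathbf{x}_{j',i'})/2$ are supported on the union of the (at most two) color blocks belonging to $j$ and $j'$, so they have at most four non-zero entries; these points carry label~$0$. Hence every constructed data point has at most four non-zero entries and every label lies in $\{0,1\}$.

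Since the reduction itself, the bound $d = 2k$ on the dimension, and the polynomial bound on the number of points are unchanged, the correctness proof from \Cref{thm:hard} applies verbatim. Thus the same chain of implications yields NP-hardness, W[1]-hardness with respect to~$d$, and the ETH-based $\rho(d)\cdot n^{o(d)}$ lower bound for this restricted version of $1$-\textsc{ReLU}($\ell^p$). There is no real obstacle here: the corollary is a bookkeeping consequence of how the points were constructed, so the proof collapses to the two short observations above.
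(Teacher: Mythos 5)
Your proposal is correct and matches the paper's reasoning exactly: the paper derives \Cref{Cor:hard} by the same ``closer inspection'' of the construction in \Cref{thm:hard}, noting that the vertex points are supported on a single two-coordinate color block and the midpoints on at most two such blocks, with all labels in $\{0,1\}$. Nothing further is needed.
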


We further remark that the basic idea of the reduction in the proof of \Cref{thm:hard} also works for more general loss functions.
Essentially, the only necessary condition is that the value $M$ can be chosen such that it is polynomially bounded in the size of the graph $G$ and satisfies an inequality analogous to~\eqref{Equ:M} where~$(\cdot)^p$ is replaced by the corresponding loss function.
We refrain from giving a precise formalization here.
Moreover, it is natural to expect that $k$-ReLU with $k>1$ is computationally even more difficult than the one-neuron case. Hence, our hardness results should also hold there as well. 
Indeed, we expect this to be also true for deeper neural networks; however, a formal proof would require a more profound understanding of the complicated functions expressible with deeper networks and is left for future work.

Our findings tell us that in order to achieve fixed-parameter tractability, one has to consider other parameters to combine with the dimension~$d$.
A natural parameter is the target loss~$\gamma$.
However, this is not a promising parameter since it can be made arbitrarily small by scaling all values.
If we consider the number~$\sigma$ of different coordinate values of the~$\mathbf{x}_i$, then we trivially obtain fixed-parameter tractability in combination with~$d$ since the overall number of different data points is at most~$\sigma^d$.
Hence, the algorithm by~\citeA{ABMM18} runs in~$\sigma^{d^2}\cdot \poly(nd)$ time.

To sum up, identifying promising parameters (or parameter combinations) 
to obtain tractable cases remains 
a challenge worthwhile further investigation.

\section{Polynomial-time Algorithm for a Single ReLU with Maximum Norm}\label{Sec:Poly}

As pointed out by \citeA{GKMR21}, deciding whether given data points are realizable by a single ReLU neuron (that is, whether $\gamma =0$) can be done in polynomial time via linear programming. In other words, it is possible to check whether the input points can be perfectly fitted by a single ReLU neuron and, in case of a positive answer, to find the corresponding weights in polynomial time. Recall that the same problem is NP-hard in the case of two (or more) neurons by \citeA{GKMR21}.

In this section, we extend this result to minimizing the $\ell^\infty$-loss, that is, minimizing the maximum prediction error.
In fact, we provide a polynomial-time optimization algorithm (not only decision) for a problem variant that generalizes $\ell^\infty$-loss minimization.
In this variant, the real labels~$y_i$ for the data points $\mathbf x_i$ are replaced by target intervals $[\alpha_i,\beta_i]$ with $\alpha_i\leq\beta_i$ and we aim to minimize the maximum deviation of a prediction from its corresponding target interval.
To this end, we define $\dist_{\alpha,\beta}(t)\coloneqq\max\{\alpha-t,0,t-\beta\}$ to be the \emph{distance} of $t\in\R$ to the interval $[\alpha,\beta]$.

\optproblemdef{ReLU($\ell^\infty$-Interval)}
{Data points~$\mathbf{x}_1,\ldots,\mathbf{x}_n\in \R^d$ and interval boundaries $\alpha_1\le\beta_1,\ldots,\alpha_n\le\beta_n\in\R$.}
{Find~$\mathbf{w}\in\R^d$ and~$b\in\R$ that minimize $\max_{i\in[n]}\ \dist_{\alpha_i,\beta_i}([\langle\mathbf{w},\mathbf{x}_i\rangle + b]_+)$.}
Note that we obtain $\ell^\infty$-loss minimization by setting \mbox{$\alpha_i=\beta_i=y_i$} for all~$i\in[n]$.

\begin{proposition}\label[proposition]{prop:poly}
	\textsc{ReLU($\ell^\infty$-Interval)} can be solved in polynomial time.
\end{proposition}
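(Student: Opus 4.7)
The plan is to reduce the problem to solving a polynomially bounded number of linear programs. Writing $z_i := \langle\mathbf{w},\mathbf{x}_i\rangle + b$, the constraint $\dist_{\alpha_i,\beta_i}([z_i]_+) \le \gamma$ decomposes into an upper part $[z_i]_+ \le \beta_i + \gamma$, equivalent to the two linear inequalities $\gamma \ge -\beta_i$ and $z_i \le \beta_i + \gamma$, and a lower part $[z_i]_+ \ge \alpha_i - \gamma$, equivalent to the \emph{disjunction} $\gamma \ge \alpha_i$ \emph{or} $z_i \ge \alpha_i - \gamma$. If it were not for these disjunctions, minimizing~$\gamma$ over $\mathbf{w}\in\R^d$ and $b,\gamma\in\R$ subject to the remaining inequalities would already be a standard linear program.

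To resolve the disjunctions, observe that the first branch $\gamma \ge \alpha_i$ depends only on the scalar~$\gamma$. Hence at any optimal solution with value~$\gamma^*$, the set $S^* := \{i \in [n] : \alpha_i > \gamma^*\}$ is precisely the set of indices for which the linear branch must be enforced, and $S^*$ consists of the indices of the $k$ largest $\alpha$-values for some $k \in \{0,\ldots,n\}$. After sorting so that $\alpha_{(1)} \le \cdots \le \alpha_{(n)}$, I would, for each $k \in \{0,1,\ldots,n\}$, solve the linear program $\mathrm{LP}_k$ that minimizes~$\gamma$ subject to $\gamma \ge 0$, $\gamma \ge -\beta_i$ and $z_i \le \beta_i + \gamma$ for all $i \in [n]$, $z_i \ge \alpha_i - \gamma$ for each $i$ among the top $k$ by $\alpha$, and $\gamma \ge \alpha_{(n-k)}$ (void when $k=n$). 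Each such program has $d+2$ variables and $O(n)$ constraints and is solvable in polynomial time.

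Correctness then amounts to two matching inequalities. First, any feasible $\mathrm{LP}_k$-solution $(\mathbf{w},b,\gamma)$ yields a feasible solution of the original problem with the same objective, because the upper disjuncts are enforced directly and, for each~$i$, either the linear lower disjunct $z_i \ge \alpha_i - \gamma$ (for $i$ in the top~$k$) or the scalar lower disjunct $\gamma \ge \alpha_i$ (for $i$ outside the top~$k$, implied by $\gamma \ge \alpha_{(n-k)}$) is imposed. Second, for any original optimum $(\mathbf{w}^*,b^*,\gamma^*)$, the triple is feasible for $\mathrm{LP}_{k^*}$ with $k^* := |S^*|$, so the optimum of $\mathrm{LP}_{k^*}$ is at most~$\gamma^*$. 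Taking the minimum over the $n+1$ LP optima therefore gives the true optimum together with the corresponding weights. The one step that needs care is the boundary behavior, especially when several $\alpha_i$ coincide or when $\gamma^* = \alpha_i$ for some~$i$; a fixed tie-breaking rule on the sorted $\alpha$-values ensures that the top-$k$ family still covers every possible~$S^*$, after which the polynomial-time bound is immediate from the polynomial-time solvability of linear programming.
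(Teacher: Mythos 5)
Your proposal is correct and takes essentially the same approach as the paper: both resolve the single nonlinearity by observing that the lower-bound constraint for point $i$ is either vacuous (when $\gamma^* \ge \alpha_i$) or linear, and then guess where $\gamma^*$ falls among the sorted $\alpha_i$-values, solving one linear program per guess. The only differences are minor---you solve all $n+1$ LPs and add the explicit constraint $\gamma \ge \alpha_{(n-k)}$ so that every LP solution is directly feasible for the original problem and taking the minimum is immediately valid, whereas the paper omits that constraint and instead exploits monotonicity of the LP optima to locate the correct threshold by binary search using $O(\log n)$ LP solves; both yield polynomial time.
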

\begin{proof}
	We show that an optimal solution can be found via solving a series of linear programs.
	For each $i\in[n]$ with $\alpha_i>0$, our algorithm finds out whether the optimal objective value~$\gamma^*$ is larger or smaller than $\alpha_i$.
	In the first case, the prediction~\mbox{$\langle\mathbf{w},\mathbf{x}_i\rangle + b$} is allowed to be arbitrarily small, while in the second case we need to ensure the lower bound \mbox{$\langle\mathbf{w},\mathbf{x}_i\rangle + b \geq \alpha_i - \gamma^*$}.
	Therefore, we implement a binary search to find an interval in which~$\gamma^*$ is contained as follows.
	Let~$\{\tilde{\alpha}_1,\tilde{\alpha}_2,\dots,\tilde{\alpha}_r\}$ be the set of all distinct positive~$\alpha_i$-values, $i\in[n]$, sorted by index such that \mbox{$0\eqqcolon\tilde{\alpha}_0<\tilde{\alpha}_1<\dots<\tilde{\alpha}_r<\tilde{\alpha}_{r+1}\coloneqq \infty$}.
	Let~\mbox{$s^*\in[r+1]$} denote the (unknown) index with \mbox{$\gamma^*\in\left[\tilde{\alpha}_{s^*-1}, \tilde{\alpha}_{s^*}\right[$}.
	For each $s\in[r+1]$, we define a linear program denoted by LP($s$) which minimizes the maximum deviation under the assumption that only the predictions for data points~$\mathbf x_i$ with $\alpha_i\geq\tilde{\alpha}_s$ are bounded from below, while all other predictions can be arbitrarily small.	
	\begin{equation}\tag{LP($s$)}
		\begin{aligned}
			\min_{\mathbf w, b, \gamma} \quad&\gamma\\
			\text{s.t.}\quad & \langle\mathbf{w},\mathbf{x}_i\rangle + b \in [\alpha_i - \gamma, \beta_i + \gamma], &\quad i\in[n] \text{ with } \alpha_i \geq \tilde{\alpha}_{s},\\
			& \langle\mathbf{w},\mathbf{x}_i\rangle + b \leq \beta_i + \gamma, &\quad i\in[n] \text{ with } \alpha_i < \tilde{\alpha}_{s},\\
			& \gamma \geq -\beta_i, &\quad i\in[n],\\
			& \gamma \geq 0.
		\end{aligned}
	\end{equation}
	Here, the constraint $\gamma \geq -\beta_i$ is only relevant if $\beta_i<0$. In this case, it is needed to ensure that the error is at least $-\beta_i$ because a ReLU unit can only output nonnegative values.
	
	Suppose we already knew the optimal index $s^*$. Observe that, by construction of LP($s^*$), a triplet $(\mathbf{w}, b, \gamma)$ is an optimal solution for LP($s^*$) if and only if $(\mathbf{w}, b)$ is optimal for the problem \mbox{\textsc{ReLU($\ell^\infty$-Interval)}} with objective value $\gamma$. Hence, it only remains to show how~$s^*$ can be found.
	To this end, let $\gamma(s)$ be the objective value of~LP($s$) for each $s\in[r+1]$. Note that $\gamma(s_1)\geq\gamma(s_2)$ for $s_1<s_2$ because the set of constraints of~LP($s_1$) is a superset of the constraints of LP($s_2$). Hence, for $s>s^*$, it follows that \[\gamma(s)\leq\gamma(s^*)=\gamma^*<\tilde{\alpha}_{s^*}\leq \tilde{\alpha}_{s-1}.\] Similarly, for $s<s^*$, we obtain \[\gamma(s)\geq\gamma(s^*)=\gamma^*\geq \tilde{\alpha}_{s^*-1}\geq \tilde{\alpha}_s.\] As a consequence, we can determine whether $s<s^*$, $s=s^*$, or $s\geq s^*$ by solving LP($s$) and comparing $\gamma(s)$ with $\tilde{\alpha}_s$ and $\tilde{\alpha}_{s-1}$.
	Thus, using binary search and solving $\mathcal{O}(\log n)$ linear programs, we can determine $s^*$ and the optimal solution $\gamma^*$ together with the corresponding weights~$\mathbf{w}$ and~$b$.
\end{proof}

We remark that, analogously to the original problem with labels $y_i$, the zero-error case for the variant with intervals $[\alpha_i,\beta_i]$ can be solved with a single linear program instead of a binary search: It suffices to run LP($1$) once. This results in objective value $0$ if and only if all data points can be fitted precisely within their intervals.

\section{Polynomial-time Algorithm for Concave Loss in Fixed Dimension}

In this section, we prove that, for any loss function of the form $\ell(\hat{y},y)=\tilde{\ell}(\abs{\hat{y}-y})$ where $\tilde{\ell}\colon\R_{\geq0}\to\R_{\geq0}$ is concave, the problem $k$-\textsc{ReLU}($\ell$) is polynomial-time solvable for constant~$d$ (that is, it is in XP with respect to~$d$). In particular, this covers the case of $\ell^p$-loss for $p\in[0,1[$.
Notably, concave loss functions can yield increased robustness by mitigating the influence of outliers.
For convex loss functions, in particular for the $\ell^p$-loss with $p\geq1$, an analogous result has already been shown by \citeA[Theorem~4.1]{ABMM18}. More precisely, they showed that, if $\ell$ is convex, then $k$-\textsc{ReLU}($\ell$) can be solved in $O(2^k n^{dk} \poly(n,d,k))$ time.
The idea of their algorithm is essentially to try out all $O(n^d)$ hyperplane partitions of the~$n$ input points for each of the $k$ ReLU neurons
and solve a corresponding convex program.

For the concave case, we follow a similar approach. The only but decisive difference is that the occurring subproblems are not convex programs. Instead, we show that they can be written as optimization problems over polyhedra with an objective function that is piecewise concave. It is well-known that global optima of concave problems always occur at a vertex of the feasible polyhedron~\cite{Benson95} and that it is possible to enumerate all vertices of the polyhedron in XP-time~\cite{KP03}. However, since in our case the objective function is only piecewise concave, it is possible that no vertex is a global optimum. Instead, we need to enumerate all vertices of all concave pieces of the feasible region. We show that this can still be done in XP-time, completing the parameterized complexity classification picture.

\begin{theorem}\label{thm:XP}
	For every loss function being of the form $\ell(\hat{y},y)=\tilde{\ell}(\abs{\hat{y}-y})$ with a concave function~\mbox{$\tilde{\ell}\colon\R_{\geq0}\to\R_{\geq0}$}, the problem $k$-\textsc{ReLU}($\ell$) is solvable in time $2^k(nk)^{O(dk)}\poly(n,d,k)$.
\end{theorem}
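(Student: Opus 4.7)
The plan is to follow the enumeration scheme of \citeA{ABMM18} but to replace their convex-program subroutine with vertex enumeration over the piecewise concave objective. First, I enumerate the $2^k$ sign patterns for the output coefficients $a_1,\ldots,a_k \in \{-1,+1\}$ together with, for each of the $k$ hidden neurons independently, one of the at most $O(n^d)$ combinatorially distinct ``active/inactive'' partitions that an affine hyperplane can induce on the $n$ input points. This yields $2^k \cdot n^{O(dk)}$ outer guesses, and each guess fixes a polyhedron $P \subseteq \R^{k(d+1)}$ in the space of all weights and biases, defined by the $nk$ linear inequalities $\pm(\langle \mathbf{w}_j,\mathbf{x}_i\rangle + b_j) \ge 0$ corresponding to the guessed activation pattern.

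On $P$ each prediction $\hat{y}_i = \sum_j a_j [\langle \mathbf{w}_j,\mathbf{x}_i\rangle + b_j]_+$ collapses to a linear function of the weights, so each signed residual $L_i := \hat{y}_i - y_i$ is linear on $P$ and the training loss becomes $\sum_{i=1}^n \tilde{\ell}(\abs{L_i})$. This function is not concave on all of $P$ because of the absolute values, but it is piecewise concave: refining $P$ by the $n$ additional hyperplanes $\{L_i = 0\}$ yields full-dimensional cells on each of which the signs of all $L_i$ are fixed, and the loss reduces to a sum of the concave function $\tilde{\ell}$ composed with a nonnegative affine function, which is concave. By Benson's classical characterization, a concave function attains its minimum over a polytope at a vertex; therefore it suffices to evaluate the loss at every vertex of every cell of the refinement inside $P$ and return the best value.

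The arrangement refining $P$ uses $O(nk)$ hyperplanes in ambient dimension $k(d+1)$, so the total vertex count across all cells is $(nk)^{O(dk)}$, and these vertices can be listed in output-polynomial time by the algorithm of \cite{KP03}. Multiplying the $2^k \cdot n^{O(dk)}$ outer guesses by the $(nk)^{O(dk)}$ inner vertex enumeration and a $\poly(n,d,k)$ cost for evaluating the loss at each vertex yields the claimed running time bound of $2^k (nk)^{O(dk)} \poly(n,d,k)$.

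The main obstacle I anticipate is that a cell of the refinement may be unbounded: the minimum of a concave function over an unbounded polyhedron need not be attained at any finite vertex, even though our objective is bounded below by $0$. I would handle this by intersecting $P$ with an explicit bounding box of polynomial bit size and arguing that either the optimum already lies in this box, or one can inspect the recession cones of the cells together with the growth behavior of $\tilde{\ell}$ to trace every improving direction at infinity back to a finite vertex of an adjacent cell. A careful but routine treatment of vertex-enumeration degeneracies (points where more than $k(d+1)$ hyperplanes meet) is additionally needed to make the $(nk)^{O(dk)}$ bound on the output size go through.
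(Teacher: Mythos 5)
Your overall strategy is exactly the one the paper uses: enumerate the $2^k$ sign choices and the $n^{O(dk)}$ activation patterns, observe that on the resulting polyhedron the loss is piecewise concave with pieces cut out by the $n$ hyperplanes $\{L_i=0\}$, and enumerate vertices of the refined cells. However, the step you flag as ``the main obstacle I anticipate'' is precisely the step that needs a proof, and the workaround you sketch (a bounding box plus an inspection of recession cones and ``tracing improving directions at infinity back to a finite vertex of an adjacent cell'') is neither carried out nor the right fix. The clean resolution is different: since the objective is concave on each cell and bounded below by $0$, concavity already forces it to be non-decreasing along every recession direction (a concave function on a ray that ever decreases is unbounded below), so the minimum over an unbounded cell equals the minimum over the convex hull of its vertices and is attained at a vertex. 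This is the standard fact, cited in the paper from Benson, that a concave function bounded from below on a \emph{pointed}, nonempty polyhedron attains its minimum at a vertex --- no truncation argument is needed.

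What this shifts the burden to, and what your proposal misses entirely, is \emph{pointedness}: the cells must actually have vertices. If the data points do not affinely span $\R^d$, the feasible polyhedron contains a line (a whole affine subspace of weight/bias pairs inducing identical predictions), has no vertices at all, and your vertex enumeration returns nothing. The paper handles this by first invoking the without-loss-of-generality reduction to the case where the affine hull of the $\mathbf{x}_i$ is all of $\R^d$, and then explicitly exhibiting $kd+k$ linearly independent constraints active at $\mathbf{0}_{kd+k}$, showing that $\mathbf{0}$ is a vertex of $P$ and hence that every nonempty cell $P(\mathbf{s})$ is pointed. With that lemma in place, your counting of $\binom{kn+n}{kd+k}\le (nk)^{O(dk)}$ candidate vertices (subsets of the $kn$ activation constraints and $n$ sign constraints) goes through and the degeneracy worry you raise at the end is moot, since one simply checks each constraint subset for linear independence and feasibility rather than enumerating the cells geometrically. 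So the architecture of your argument is right, but as written it has a gap exactly at the attainment-at-a-vertex step, and the fix you propose is not the one that works.
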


\begin{proof}
	Following the approach by \citeA[Algorithm~1]{ABMM18}, for each neuron~\mbox{$j\in[k]$},
	we consider each coefficient~$a_j\in\{-1,1\}$ and each hyperplane partition $P_+^j \cup P_-^j = [n]$, $P_+^j \cap P_-^j = \emptyset$, of the $n$ (indices of the) data points (that is, there exists a $(d-1)$-dimensional hyperplane, defined by a vector~$\mathbf{w}_j$ and a bias~$b_j$, separating $P_+^j$ and~$P_-^j$, compare \Cref{Fig:HyperplanePartition}).
	Here, $P_+^j$ is the \emph{active} set, where $\langle\mathbf{w}_j,\mathbf{x}_i\rangle+b_j \ge 0$ shall hold for each~$i\in P_+^j$ and \mbox{$\langle\mathbf{w}_j,\mathbf{x}_i\rangle+b_j \leq 0$} for each~$i\in P_-^j$.
	As in the algorithm by \citeA{ABMM18}, this results in a total of (at most) $2^kn^{dk}$ subproblems.
	For fixed coefficients $a_j$ and fixed partitions $(P_+^j, P_-^j)$, $j\in[k]$, the corresponding subproblem (compare Line~8 in Algorithm~1 of \citeA{ABMM18}) is the following:
	\begin{equation}\label{Equ:sub}
		\begin{aligned}
			\min_{\mathbf{w}_j, b_j} \quad&\sum_{i=1}^n \tilde{\ell}\left(\abs{y_i-\sum_{j\colon i\in P_+^j} a_j(\langle\mathbf{w}_j,\mathbf{x}_i\rangle + b_j) }\right)\\
			\text{s.t.}\quad & \langle\mathbf{w}_j, \mathbf{x}_i\rangle + b_j \leq 0, \qquad j\in[k], i\in P_-^j,\\
			& \langle\mathbf{w}_j, \mathbf{x}_i\rangle + b_j \geq 0, \qquad j\in[k], i\in P_+^j.
		\end{aligned}
	\end{equation}
	In the following, we show that this problem can be solved in XP-time with respect to~$d$.
	
	As argued in the introduction, we may assume without loss of generality that the affine hull of the data points $\mathbf{x}_i$, $i\in[n]$, is the whole space~$\R^d$ because, otherwise, we could solve the problem within a lower-dimensional affine subspace.
	We first show that this implies that the feasible region $P\subseteq \R^{kd+k}$ of \eqref{Equ:sub} is \emph{pointed}, that is, it has at least one vertex. More precisely, we show that the zero vector $\mathbf{0}_{kd+k}$ is a vertex of $P$. To do so, we need to show that it satisfies $kd+k$ linearly independent constraints of~\eqref{Equ:sub} with equality. Since $\mathbf{0}_{kd+k}$ satisfies every constraint of~\eqref{Equ:sub} with equality, we only need to show that there exist $kd+k$ linearly independent rows. We write $\mathbf r_{ij}\coloneqq(\mathbf{0}_{d(j-1)},\mathbf x_i,\mathbf{0}_{d(k-j)}, \mathbf e_j)\in \R^{kd+k}$, $i\in[n]$, $j\in[k]$, for the~$kn$ rows of the constraint matrix, where $\mathbf e_j\in\{0,1\}^k$ is the $j$-th unit vector. By our assumption that the affine hull of the data points is the whole space $\R^d$, there exists a subset $S\subseteq[n]$ of $d+1$ indices such that the $d+1$ vectors $\mathbf x_i$, $i\in S$, are affinely independent. This implies that, for each fixed $j$, the $d+1$ rows $\mathbf{r}_{ij}$ are linearly independent. Moreover, since, for $j_1\neq j_2$ and arbitrary $i_1,i_2\in[n]$, two rows $\mathbf r_{i_1j_1}$ and $\mathbf r_{i_2j_2}$ have non-zero entries only in distinct columns, it follows that the $kd+k$ rows $\mathbf r_{ij}$, $i\in S$, $j\in[k]$, are linearly independent. Hence, $P$ is pointed.
	
	Next, we divide the feasible region $P$ of \eqref{Equ:sub} into several polyhedral pieces, depending on the sign of the prediction error at each data point. Let $\mathbf{s} = (s_i)_{i\in[n]} \in\{-1,1\}^n$ be a sign vector and let
	\begin{align*}
	P(\mathbf{s})\coloneqq\Bigg\{
	(\mathbf w_1, \dots, \mathbf w_k, b_1,\dots, b_k)\in P 
	\mathrel{}\Bigg|\mathrel{}\forall i\in[n]\colon s_i\Bigg(y_i-\sum_{j\colon i\in P_+^j} a_j(\langle\mathbf{w}_j,\mathbf{x}_i\rangle + b_j)\Bigg)\geq0	\Bigg\}
	\end{align*}
	be the subset of the feasible region $P$ for which the sign of the prediction error for each data point~$\mathbf{x}_i$ coincides with $s_i$. Since $P$ is pointed, $P(\mathbf{s})$ must be pointed as well. Moreover, by definition, the prediction error of every data point has a fixed sign within $P(\mathbf{s})$, implying that the objective function of \eqref{Equ:sub} (as a sum of concave functions) is concave within $P(\mathbf{s})$ (compare \Cref{Fig:PiecewiseConcave}). In addition, the objective value is trivially bounded from below by $0$. Since the minimum of a bounded (from below), concave function over a pointed, nonempty polyhedral set is always attained by a vertex~\cite{Benson95}, it follows that $P(\mathbf{s})$ is either empty or must have a vertex minimizing the loss within $P(\mathbf{s})$. Since $P=\bigcup_{\mathbf{s}\in\{-1,1\}^n} P(\mathbf{s})$, it follows that the optimal solution of~\eqref{Equ:sub} must be a vertex of one of the polyhedral sets~$P(\mathbf{s})$. Hence, it suffices to enumerate all these vertices. Compare \Cref{Fig:PolyhedralSubdivision} for a schematic illustration of this idea. Each vertex of one of the polyhedra $P(\mathbf{s})$ is given by $kd+k$ linearly independent inequalities that hold with equality. For selecting these $kd+k$ equations, we have the choice between a total of~\mbox{$kn+n$} equations: the~$kn$ constraints of \eqref{Equ:sub} as well as the $n$ equations corresponding to the sign constraints defined by~$\mathbf{s}$. Note that these $n$ equations are the same for each $\mathbf{s}$ although the inequalities are different.
	
	\begin{figure}[t]
		\centering
		\begin{minipage}{.47\textwidth}
			\centering
			\begin{tikzpicture}[inner sep=3]
				\node[fill] at (-1ex,2ex) {};
				\node[fill] at (2ex,7ex) {};
				\node[fill] at (6ex,3ex) {};
				
				\node[fill] at (-5ex,-6ex) {};
				\node[fill] at (-11ex,-5ex) {};
				\node[fill] at (-10ex,1ex) {};
				
				\draw[very thick] (-6ex,10ex) -- (-1ex,-9ex);
				
				\node at (-12ex, 7ex) {$P_-^j$};
				\node at (5ex, -6ex) {$P_+^j$};
			\end{tikzpicture}
			\captionof{figure}{Two-dimensional illustration of the hyperplane partitions of the data points into $P_+^j=\{i\in[n]\mid\langle\mathbf{w}_j,\mathbf{x}_i\rangle+b_j\ge 0\}$ and $P_-^j=\{i\in[n]\mid \langle\mathbf{w}_j,\mathbf{x}_i\rangle+b_j < 0\}$.}
			\label{Fig:HyperplanePartition}
		\end{minipage}\hfill
		\begin{minipage}{.47\textwidth}
			\centering
			\begin{tikzpicture}[scale=1.7]
				\footnotesize
				\draw[-{Stealth}] (-1.2,0) -- (1.2,0) node[below] {prediction error};
				\draw[-{Stealth}] (0,0) node[below] {0} -- (0,1.2) node[right] {loss};
				\draw[domain=0:1,smooth,variable=\x, thick] plot (\x,{sqrt(\x)});
				\draw[domain=-1:0,smooth,variable=\x, thick] plot (\x,{sqrt(-\x)});
			\end{tikzpicture}
			\captionof{figure}{The contribution of a data point~$\mathbf{x}_i$ to the objective function is not globally concave. However, it is concave if the sign of the prediction error \mbox{$y_i-\textstyle\sum_{j\colon i\in P_+^j} a_j(\langle\mathbf{w}_j,\mathbf{x}_i\rangle + b_j)$} is fixed.}
			\label{Fig:PiecewiseConcave}
		\end{minipage}
	\end{figure}
	
	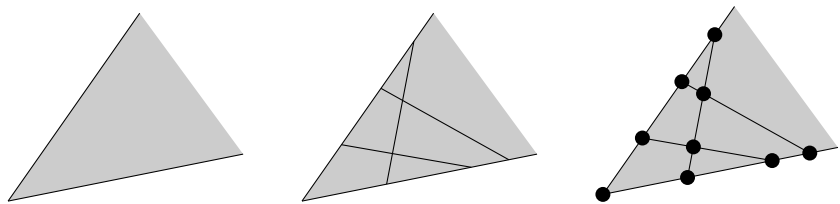
\begin{figure}[t]
		\centering
		\begin{tikzpicture}[scale=0.25, inner sep=2]
			
			\fill[black!20] (0,0) -- (12.5,2.5) coordinate (e1) -- (7,10) coordinate (e2) -- (0,0);
			
			\draw (0,0) -- (e1);
			\draw (0,0) -- (e2);
			
		\end{tikzpicture}\qquad
		\begin{tikzpicture}[scale=0.25, inner sep=2]
			
			\fill[black!20] (0,0) -- (12.5,2.5) coordinate (e1) -- (7,10) coordinate (e2) -- (0,0);
			
			\draw (0,0) -- (e1);
			\draw (0,0) -- (e2);
			
			\draw (2.1,3) coordinate (a1) -- (9,1.8) coordinate (a2);
			\draw (4.2,6) coordinate (b1) -- (11,2.2) coordinate (b2);
			\draw (5.95,8.5) coordinate (c1) -- (4.5,0.9) coordinate (c2);
			
		\end{tikzpicture}\qquad
		\begin{tikzpicture}[scale=0.25, inner sep=2]
			
			\fill[black!20] (0,0) -- (12.5,2.5) coordinate (e1) -- (7,10) coordinate (e2) -- (0,0);
			
			\draw (0,0) -- (e1);
			\draw (0,0) -- (e2);
			
			\draw (2.1,3) coordinate (a1) -- (9,1.8) coordinate (a2);
			\draw (4.2,6) coordinate (b1) -- (11,2.2) coordinate (b2);
			\draw (5.95,8.5) coordinate (c1) -- (4.5,0.9) coordinate (c2);
			
			\node[circle, fill] at (0,0) {};
			\node[circle, fill] at (a1) {};
			\node[circle, fill] at (a2) {};
			\node[circle, fill] at (b1) {};
			\node[circle, fill] at (b2) {};
			\node[circle, fill] at (c1) {};
			\node[circle, fill] at (c2) {};
			\node[circle, fill] at (intersection of a1--a2 and c1--c2) {};
			\node[circle, fill] at (intersection of b1--b2 and c1--c2) {};
			
		\end{tikzpicture}
		\caption{Schematic illustration of how an optimal solution to the subproblem \eqref{Equ:sub} can be found. The feasible region is a pointed polyhedral cone (left). The hyperplanes where the prediction error at a certain data point equals zero subdivide $P$ into the regions $P(\mathbf{s})$ (middle). Since the objective function is concave in each of these regions, it suffices to check the vertices of all regions (right).}
		\label{Fig:PolyhedralSubdivision}
	\end{figure}
	
	We conclude that it suffices to check all \mbox{$\binom{kn+n}{kd+k} \le (nk)^{O(dk)}$} possible subsets of $kd+k$ equations.
	If the chosen equations are linearly independent, then we can determine the corresponding unique solution and check whether it is a feasible solution to \eqref{Equ:sub}.
	For each chosen set of equations, these steps can be done in $\poly(n,d,k)$ time.
	Among all feasible solutions found that way, we take the best one.
	Consequently, each of the (at most) $2^kn^{dk}$ subproblems can be solved in $(nk)^{O(dk)}\poly(n,d,k)$ time, resulting in the claimed overall running time.
\end{proof}
In comparison to the algorithm for convex loss functions~\cite{ABMM18}, our algorithm for concave loss functions requires more time to solve the $O(2^kn^{dk})$ many subproblems, namely $(nk)^{O(dk)}\poly(n,d,k)$ instead of $\poly(n,d,k)$ time each. This confirms the general theme in optimization that convex problems are easier to solve than non-convex problems. However, due to the combinatorial search, both cases result in an XP overall running time.

\section{Conclusion}

We closed some gaps regarding the computational complexity of training ReLU networks
by proving tight parameterized hardness results and essentially optimal algorithms, thus settling the parameterized complexity.
Notably, as \citeA{GKMR21} point out, every \emph{proper learning} algorithm also solves the training problem. Hence, our results also imply parameterized hardness of proper learning. 

As our results confirm computational intractability also from a parameterized perspective,
this motivates the challenging task to identify suitable parameters to achieve fixed-parameter tractability results.
For example, parameterizing by some ``distance from triviality'' measure (e.g.\ assuming specially structured input data) might be an interesting approach~\cite{Nie06}.
We conclude with some specific open questions:
\begin{itemize}
\item What is the parameterized complexity of \textsc{$k$-ReLU($\ell$)} with respect to~$d$ for~$k\ge 2$ in the zero-error case?
While polynomial-time algorithms for $k=1$ are available (compare \citeA{GKMR21} and \Cref{Sec:Poly}), the zero-error case is NP-hard for $k\geq2$~\cite{BJW19,GKMR21}.
Notably, \citeA[Section~4]{BDL20} showed NP-hardness of the zero-error case for~$k=2$ if the output neuron is also a ReLU.
They give a polynomial-time reduction from the \textsc{2-Hyperplane Separability} problem, which is known to be W[1]-hard with respect to dimension and to have an ETH-based running time lower bound~\cite{GKR09}.
In fact, the reduction of \citeA{BDL20} is a parameterized reduction with respect to~$d$ (the reduction uses two additional dimensions).
Thus, as a corollary, we obtain that, for $k=2$, the zero-error case with ReLU output is W[1]-hard with respect to~$d$ and not solvable in~$n^{o(d)}$ time assuming ETH.

\item Is $1$-\textsc{ReLU($\ell$)} fixed-parameter tractable with respect to~$d$ if all input points contain at most three non-zero entries? For at most four non-zero entries, we showed W[1]-hardness in \Cref{Cor:hard}.

\item Confronting inapproximability results for polynomial-time algorithms (see, e.g., \citeA{GKMR21}) and our W[1]-hardness result for exact algorithms (\Cref{thm:hard}), a natural follow-up question is: Can acceptable worst-case approximation ratios be obtained in FPT-time?

\item What is the (parameterized) complexity of training deeper neural networks (with at least three layers)?
\end{itemize}

\acks{This work was done while Christoph Hertrich was part of the Combinatorial Optimization and Graph Algorithms Group at Technische Universit{\"a}t Berlin and received funding from \mbox{DFG-GRK 2434} ``Facets of Complexity''. Christoph Hertrich would like to thank Amitabh Basu, Marco Di Summa, and Martin Skutella for many valuable discussions about ReLU Neural Networks.
}

\vskip 0.2in
\bibliography{ref}
\bibliographystyle{theapa}

\end{document}